\documentclass[10pt,conference]{IEEEtran}
\usepackage[bookmarks=true]{hyperref}
\usepackage{times}

\usepackage[numbers]{natbib}
\usepackage{multicol}

\usepackage{amsmath,amssymb,amsfonts}
\usepackage[ruled, vlined]{algorithm2e}
\usepackage{graphicx}
\usepackage{epstopdf}
\usepackage{textcomp}
\usepackage{xcolor}

\usepackage{siunitx}

\usepackage{amsthm}
\newtheorem{defn}{Definition}
\newtheorem{thm}{Theorem}

\usepackage{mymacros}

\newcommand{\eshadow}{$\epsilon$-shadow}
\newcommand{\ejshadow}{$\epsilon_j$-shadow}

\sisetup{detect-weight=true, detect-family=true}

\pagestyle{empty}

%%%% IMPORTANT NOTE: this version was created after submission to correct a
%% typographical error. At several points in the submitted version, I referred to sigma where I should have used sigma^-1

\pdfinfo{
   /Author (Charles Dawson, Andreas Hofmann, Brian Williams)
   /Title  (Fast Certification of Collision Probability Bounds with Uncertain Convex Obstacles)
   /CreationDate (D:20200119120000)
   /Subject (Probabilistic Collision Checking)
   /Keywords (Collision checking;Collision risk estimation;Computational geometry)
}

\begin{document}

\title{
    Fast Certification of Collision Probability Bounds with Uncertain Convex Obstacles
}

\author{Author Names Omitted for Anonymous Review. Paper-ID 106}

\author{\IEEEauthorblockN{Charles Dawson}
\IEEEauthorblockA{\textit{Dept. of Aeronautics and Astronautics} \\
\textit{Massachusetts Institute of Technology}\\
Cambridge, MA, USA \\
cbd@mit.edu}
\and
\IEEEauthorblockN{Andreas Hofmann}
\IEEEauthorblockA{\textit{Dept. of Aeronautics and Astronautics} \\
\textit{Massachusetts Institute of Technology}\\
Cambridge, MA, USA \\
hofma@csail.mit.edu}
\and
\IEEEauthorblockN{Brian Williams}
\IEEEauthorblockA{\textit{Dept. of Aeronautics and Astronautics} \\
\textit{Massachusetts Institute of Technology}\\
Cambridge, MA, USA \\
williams@csail.mit.edu}
}

\maketitle

\begin{abstract}

To operate reactively in uncertain environments, robots need to be able to quickly estimate the risk that they will collide with their environment. This ability is important for both planning (to ensure that plans maintain acceptable levels of safety) and execution (to provide real-time warnings when risk exceeds some threshold). Existing methods for estimating this risk are often limited to models with simplified geometry (e.g. point robots); others handle complex geometry but are too slow for many applications. In this paper, we present two algorithms for quickly computing upper bounds on the risk of collision between a robot and uncertain obstacles by searching for certificate regions that capture collision probability mass while avoiding the robot. These algorithms come with strong theoretical guarantees that the true risk does not exceed the estimated value, support arbitrary geometry via convex decomposition, and provide fast query times ($<$\SI{200}{\micro s}) in representative scenarios. We characterize the performance of these algorithms in environments of varying complexity, demonstrating at least an order of magnitude speedup over existing techniques.

\end{abstract}

\section{Introduction}

To operate safely in the real world, robots must be able to manage risk stemming from the pervasive uncertainty that distinguishes real environments from carefully-managed laboratory tests. Outside the lab, robots must contend with factors such as sensor noise and human unpredictability that increase the risk of collision and injury. In order to manage these various sources of uncertainty, robots need to be able to quickly and accurately estimate the probability that a given configuration will result in a collision. For example, if a robot seeks to plan a trajectory where the risk of collision does not exceed some threshold, then it must have some way to evaluate the risk of collision at each step along the trajectory. Similarly, if an autonomous vehicle must maintain a set level of safety or else revert to a fail-safe state, then it must continuously track the probability of imminent collision.

In safety-critical applications, where false-negatives (underestimates of true risk) can have severe consequences, we especially desire measures that are guaranteed to never underestimate the risk of collision. By providing such guarantees (through the use of easily-verified risk certificates), we can limit the frequency of false-negatives: an autonomous car using such conservative estimates might ask for assistance more often, but it would never stay silent in a dangerous situation.

Real-world environments like factories and roads are challenging to navigate not only because they involve uncertainty but also because they involve obstacles with complex geometry (e.g. shelves or debris). Unfortunately, many existing approaches for computing collision risk in uncertain environments rely on simplified geometric representations, representing robots as points \cite{blackmoreChanceConstrainedOptimalPath2011,axelrodProvablySafeRobot2018,ludersChanceConstrainedRRT2010} or spheres \cite{vandenbergLQGMPOptimizedPath2011,parkFastBoundedProbabilistic2018}, rather than supporting arbitrary geometries.
Of the approaches that support complex geometries, many consider uncertainty only in the state of the robot \cite{daiChanceConstrainedMotion2018,sunSafeMotionPlanning2016}, ignoring potential uncertainty in the location of obstacles. These approaches may work well in static environments, but when relying on noisy sensors in dynamic environments one must account for uncertainty in the environment as well. For example, when designing a robotic arm, it is much easier to add sensors that track the robot's joint angles than to instrument the entire environment, so obstacle uncertainty dominates uncertainty in robot state. Approaches that support uncertain obstacles in addition to complex geometry exist but require query times between \SI{100}{ms} and \SI{10}{s} \cite{jasourMomentSumofSquaresApproachFast2018,parkEfficientProbabilisticCollision2017}, which can be disqualifying in safety-critical applications. An autonomous vehicle traveling at \SI{20}{m/s} on a busy street cannot tolerate even a \SI{100}{ms} delay.

Because of these gaps, there is a need to develop methods for calculating the probability of collision between a robot and its environment that a) support complex robot and environment geometry, b) account for uncertainty in the location of obstacles, and c) minimize computation time. In this paper, we specifically consider the case where the robot and obstacles are both represented as convex shapes, so that arbitrarily complex shapes can be represented as collections of convex sub-shapes, and where obstacles are subject to Gaussian uncertainty in their location.

\subsection{Contributions}

To address these needs, we present two algorithms that estimate the probability of collision between uncertain convex objects by finding certificates proving that the collision risk is below some bound. These certificates take the form of regions capturing a certain amount of collision risk; by avoiding those regions the robot limits its exposure to the captured risk.

The first algorithm uses computationally-efficient techniques from convex geometry to produce certificates of collision risk for robots and environments with non-trivial geometry This algorithm is based on previous approaches to generating collision-risk certificates for point robots, but extends those approaches to non-trivial geometry.

The second algorithm makes use of a novel multi-step search process to expand certificate regions into unoccupied areas of the environment, generating significantly tighter upper bounds on collision probability while sacrificing only a factor of 2 run-time penalty compared with the first algorithm.

Both algorithms provide strong theoretical guarantees that the true probability of collision does not exceed the estimate, both handle complex robot and environment geometry, and both scale linearly with respect to the number of obstacles and the number of robot links present.
% Subsequent sections review prior work, discuss the theoretical background for our approach, present our algorithms, and characterize the accuracy and performance of our methods.

\section{Related Work}

In the absence of uncertainty, robots can make use of a number of mature computational geometry packages for detecting collisions; prominent examples include \texttt{libccd} \cite{fiserLibccd2019}, \texttt{fcl} \cite{panFCLGeneralPurpose2012}, and the Bullet collision library \cite{coumansBulletPhysicsEngine}. Using the Gilbert-Johnson-Keerthi (GJK) \cite{gilbertFastProcedureComputing1988} and expanding polytope (EPA) algorithms \cite{bergenFastRobustGJK1999}, modern libraries can check collisions between convex shapes in microseconds.

In contrast, state-of-the-art algorithms for estimating the probability of collision between uncertain convex shapes have yet to attain similar levels of performance. Of particular note is the approach proposed by Park, Park, and Manocha \cite{parkEfficientProbabilisticCollision2017}. This approach computes the probability of collision between convex shapes by first computing the Minkowski sum of the two shapes, then computing an approximate integral over the faces of the summed shape. This approach demonstrates impressive accuracy and a novel iterative decomposition approach to convexifying non-convex geometry, but its performance is limited by expensive geometric operations such as explicitly computing set-wise sums and integrating over the faces of a 3D mesh. As a result, the authors report query times exceeding \SI{100}{ms} for convexified shapes. To avoid these expensive operations, our approach relies on a support vector representation of convex geometry, which allows constant-time construction of implicit Minkowski sums (compared to $O(v^2)$ time for explicit construction from shapes with $v$ vertices), and we avoid integration altogether, relying only on $O(v)$ GJK collision checking. For an introduction to support vector geometry, the reader is referred to \cite{schulmanFindingLocallyOptimal2013}.

A popular alternative approach to computing collision risk has been to consider not the nominal geometry but inflated shapes that represent confidence intervals encompassing space where the robot or obstacle is likely to be \cite{blackmoreChanceConstrainedOptimalPath2011}. Lee et al. account for uncertainty in robot state by generating convex confidence intervals of robot geometry \cite{leeSigmaHullsGaussian2013}, and Axelrod, Kaelbling, and Lozano-P\'erez have developed a method for expanding uncertain obstacles into (non-convex) confidence intervals \cite{axelrodProvablySafeRobot2018}. These approaches reduce the uncertain path-planning problem to deterministic path planning around expanded obstacles; however, there are a number of gaps in these techniques. The approach of Lee et al. does not provide theoretical guarantees bounding the risk of collision, and although the approach of Axelrod, Kaelbling, and Lozano-P\'erez provides such guarantees, it requires non-convex geometry that cannot be efficiently checked for collision, limiting its performance. Furthermore, in their treatment, Axelrod, Kaelbling, and Lozano-P\'erez consider only point robots, limiting its applicability.

Other approaches to estimating collision probabilities include sampling-based methods \cite{daiChanceConstrainedMotion2018,blackmoreProbabilisticParticleControl2006}, analytic approaches relying on point-mass robots and linear- or polynomial-inequality obstacles \cite{blackmoreChanceConstrainedOptimalPath2011,ludersChanceConstrainedRRT2010,jasourMomentSumofSquaresApproachFast2018}, and configuration-space collision checking, where the robot can indeed be treated as a point but obstacles are often non-convex \cite{sunSafeMotionPlanning2016}.

\section{Preliminaries}

In this section, we introduce notation, define the problem statement, and prove a theorem used in the rest of this paper.

\subsection{Problem Statement}

In the following discussion, upper case script symbols (e.g. $\mathcal{X}$, $\mathcal{O}$) are used to denote subsets of $\R^n$, such as the set of points occupied by one link of a robot or the set of points occupied by an obstacle. We primarily consider the problem of collision checking in the 3-dimensional workspace, but our approach can be generalized easily to higher dimensions. Furthermore, we restrict our analysis to the case when all shapes are convex, as most complex geometries can be represented in practice using a convex decomposition.

Given a collection of convex shapes $\mathcal{X}_1, \ldots, \mathcal{X}_m$, representing convex links of a robot, and a set of obstacles $\mathcal{O}_1, \ldots, \mathcal{O}_k$ with known shape but uncertain location, we seek to compute an upper bound $\epsilon$ on the probability that any robot link intersects any obstacle: 
\begin{equation}
    P\pn{\bigcup_{(i, j)} \mathcal{X}_i \cap \mathcal{O}_j \neq \emptyset} \leq \epsilon
\end{equation} 

The ability to quickly compute this probability bound is important in the context of risk-aware motion planning, where collision probability estimation is often a bottleneck, and in safety verification, which requires real-time performance.

This probability bound can be computed by applying Boole's inequality and calculating bounds on the probabilities that individual obstacles collide with the robot. Let $\epsilon_j$ be an upper bound on the probability that obstacle $\mathcal{O}_j$ collides with any link of the robot:
\begin{equation}
    P\pn{\bigcup_{i} \mathcal{X}_i \cap \mathcal{O}_j \neq \emptyset} \leq \epsilon_j
\end{equation}
Then the probability that \textit{any} obstacle collides with the robot is bounded above by $\sum_j \epsilon_j$, by Boole's inequality:

\begin{align}
    P\pn{\bigcup_{j}\bigcup_{i} \mathcal{X}_i \cap \mathcal{O}_j \neq \emptyset} &\leq \sum_j P\pn{\bigcup_{i} \mathcal{X}_i \cap \mathcal{O}_j \neq \emptyset} \label{eq:boole}
\end{align}

In subsequent sections, we show how these individual $\epsilon_j$ bounds can be computed efficiently using existing convex collision checking algorithms.

\subsection{Shadows of Uncertain Obstacles}

Previous work by Axelrod et al. has used geometric objects known as \eshadow s (defined below) to characterize uncertain obstacles \cite{axelrodProvablySafeRobot2018}. In this and subsequent sections, we use $\mathcal{O}$ to refer to an arbitrary obstacle.

\begin{defn}\label{eshadow_defn}
    {\normalfont (\eshadow)} A set $\mathcal{S} \subseteq \R^n$ is an \eshadow \ of an uncertain obstacle $\mathcal{O}$ if the probability $P(\mathcal{O} \subseteq \mathcal{S}) \geq 1-\epsilon$.
\end{defn}

Intuitively, an \eshadow\ is a (non-unique) region that contains the obstacle with probability at least $1-\epsilon$. A consequence of this definition is that if there exists an \eshadow\ of $\mathcal{O}$ that does not intersect the robot, then that \eshadow\ provides a certificate that the probability of collision between the robot and $\mathcal{O}$ is no more than $\epsilon$, since $P(\mathcal{O} \nsubseteq S) \leq \epsilon$:
\begin{equation}
    \bigcup_i \mathcal{S} \cap \mathcal{X}_i = \emptyset \implies P\pn{\bigcup_i \mathcal{X}_i \cap \mathcal{O} \neq \emptyset} \leq \epsilon \label{shadow_coll}
\end{equation}

To preclude trivial examples, such as $\mathcal{S} = \R^n$, we follow Axelrod et al. in considering only maximal \eshadow s:

\begin{defn}\label{max_eshadow_defn}
    {\normalfont (maximal \eshadow)} A set $\mathcal{S} \subseteq \R^n$ is a maximal \eshadow \ of $\mathcal{O}$ if the probability $P(\mathcal{O} \subseteq \mathcal{S}) = 1-\epsilon$.
\end{defn}

This refinement is important from the point of view of certifying collision risk bounds: while a $0.1$-shadow is also an $0.5$-shadow, a maximal $0.1$-shadow certifies a much tighter bound on collision probability than does a maximal $0.5$-shadow. For the remainder of this paper, we restrict our attention to \textit{maximal} \eshadow s.

In their original treatment, Axelrod et al. model uncertain obstacles as polytopes with faces defined by linear inequalities with Gaussian uncertainty in the parameters (i.e. polytopes with Gaussian-distributed faces, or PGDFs). Jasour developed a similar approach in the case of polynomials with uncertain parameters \cite{jasourMomentSumofSquaresApproachFast2018}. Although these representations are very general, and although these \eshadow s can be computed easily as conic sections (in the PGDF case) or polynomials (in the polynomial case), they are not convex in general, as can be seen in Fig.~\ref{fig:shadows}a. While it is straightforward to test whether a point robot intersects one of these \eshadow s, this representation makes it difficult to generalize to non-trivial robot geometry by leveraging existing algorithms for fast convex-convex collision checking.

\begin{figure}[htbp]
    \centerline{\includegraphics[width=0.7\linewidth]{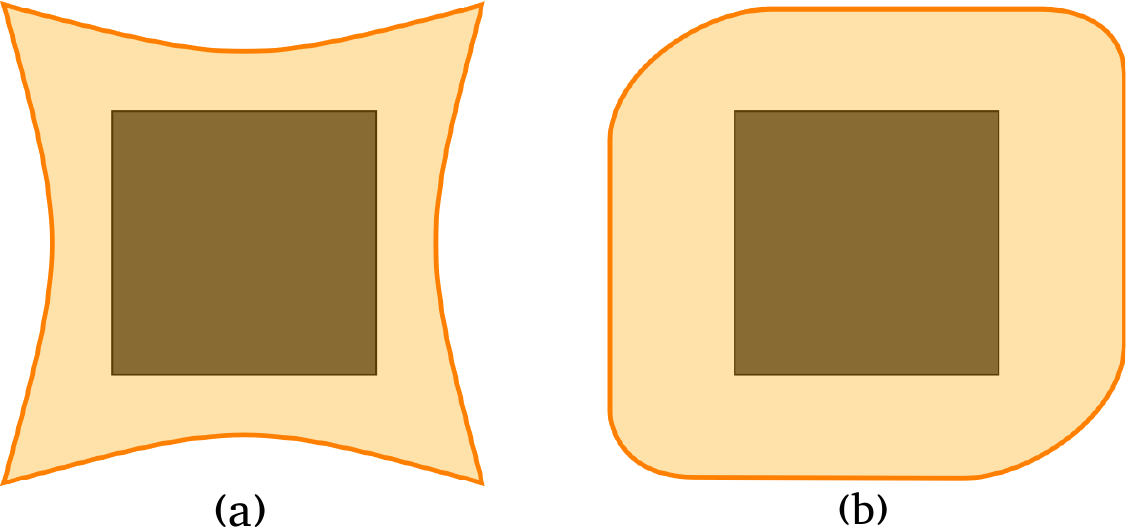}}
    \caption{(a) A representative \eshadow\ for a PGDF obstacle, as discussed in \cite{axelrodProvablySafeRobot2018}. These \eshadow s are non-convex in general. (b) A representative \eshadow\ constructed using Theorem~\ref{convexity_proof}, which is guaranteed to be convex.}
    \label{fig:shadows}
\end{figure}

Furthermore, although the PGDF representation of uncertain objects is a natural representation for obstacles derived from point cloud data, in many applications the size and shape of obstacles are known \textit{a priori} and it is only the location of the obstacle that is uncertain. In a factory, a robot might have a 3D model of a forklift but be uncertain of its exact location. Especially when objects are tracked using a computer-vision system, it is natural to represent an uncertain object as a known rigid body with uncertain 3D pose, since a forklift might be accurately identified even though its location is uncertain.

In our approach, we limit the uncertainty to affect only the location (and not the orientation) of obstacles in 3D space. This assumption restricts the range of uncertainty that we can model, but it allows us to guarantee that the resulting \eshadow s are convex as long as the underlying obstacle is convex (as shown in the following constructive proof).

\begin{thm}\label{convexity_proof}
    If an obstacle $\mathcal{O}$ with known convex geometry is subject to additive Gaussian uncertainty in its position, then there exists a convex maximal \eshadow\ of $\mathcal{O}$ for any $\epsilon > 0$.
\end{thm}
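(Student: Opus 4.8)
The plan is to build the shadow explicitly as the Minkowski sum of the obstacle with a confidence ellipsoid of its positional noise. Write the uncertain obstacle as $\mathcal{O} = \mathcal{O}_0 \oplus \{\mathbf{t}\}$, where $\mathcal{O}_0 \subseteq \R^n$ is the fixed convex geometry placed at the mean position and $\mathbf{t} \sim \mathcal{N}(\mathbf{0},\Sigma)$ is the additive positional perturbation (the mean translation absorbed into $\mathcal{O}_0$; take $\Sigma$ nonsingular, the degenerate case reducing to a lower-dimensional version on the support of $\mathbf{t}$). For $\delta \geq 0$ let $\mathcal{E}_\delta = \{\mathbf{x}\in\R^n : \mathbf{x}^\top\Sigma^{-1}\mathbf{x} \leq \delta^2\}$ be the concentric family of confidence ellipsoids of $\mathbf{t}$, and set $\mathcal{S}_\delta = \mathcal{O}_0 \oplus \mathcal{E}_\delta$. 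I claim that for a suitable $\delta$, $\mathcal{S}_\delta$ is a convex maximal \eshadow.

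Convexity is immediate, since $\mathcal{O}_0$ is convex by hypothesis, $\mathcal{E}_\delta$ is convex, and Minkowski sums of convex sets are convex. The substance is to evaluate $P(\mathcal{O}\subseteq\mathcal{S}_\delta)$ exactly. Here the key fact is that, because $\mathcal{O}_0$ is a bounded closed convex set, $\mathcal{O}_0 \oplus \{\mathbf{t}\} \subseteq \mathcal{O}_0 \oplus \mathcal{E}_\delta$ holds if and only if $\mathbf{t}\in\mathcal{E}_\delta$. One direction is trivial; the other follows by comparing support functions: $\mathcal{O}_0\oplus\{\mathbf{t}\}\subseteq\mathcal{O}_0\oplus\mathcal{E}_\delta$ iff $h_{\mathcal{O}_0}(\mathbf{u}) + \langle\mathbf{t},\mathbf{u}\rangle \leq h_{\mathcal{O}_0}(\mathbf{u}) + h_{\mathcal{E}_\delta}(\mathbf{u})$ for all $\mathbf{u}$, and cancelling the finite quantity $h_{\mathcal{O}_0}(\mathbf{u})$ leaves $\langle\mathbf{t},\mathbf{u}\rangle\leq h_{\mathcal{E}_\delta}(\mathbf{u})$ for all $\mathbf{u}$, i.e. $\mathbf{t}\in\mathcal{E}_\delta$ (equivalently, this is the Minkowski cancellation law). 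Hence $P(\mathcal{O}\subseteq\mathcal{S}_\delta) = P(\mathbf{t}\in\mathcal{E}_\delta)$, with equality, not merely $\geq$.

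It then suffices to choose $\delta$ with $P(\mathbf{t}\in\mathcal{E}_\delta) = 1-\epsilon$. Writing $\mathbf{t} = \Sigma^{1/2}\mathbf{z}$ with $\mathbf{z}\sim\mathcal{N}(\mathbf{0},I)$, the quadratic form $\mathbf{t}^\top\Sigma^{-1}\mathbf{t}$ becomes $\|\mathbf{z}\|^2$, a $\chi^2_n$ random variable, whose CDF $F$ is continuous and strictly increasing on $(0,\infty)$ with $F(0^+)=0$ and $F(r)\to1$ as $r\to\infty$; thus $g(\delta) := P(\mathbf{t}\in\mathcal{E}_\delta) = F(\delta^2)$ attains each value in $(0,1)$ exactly once. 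For $\epsilon\in(0,1)$ pick $\delta^\star = \sqrt{F^{-1}(1-\epsilon)}$ (for $\epsilon\geq1$ the statement is vacuous or trivial, since any set has probability at most $1$). Then $\mathcal{S}_{\delta^\star}$ is convex and $P(\mathcal{O}\subseteq\mathcal{S}_{\delta^\star}) = 1-\epsilon$, which is precisely a convex maximal \eshadow.

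I expect the main obstacle to be exactly the equality in the second step: because the shadow must be \emph{maximal}, one cannot settle for the easy inclusion $P(\mathcal{O}\subseteq\mathcal{S}_\delta)\geq P(\mathbf{t}\in\mathcal{E}_\delta)$ that a crude enclosing argument would give, so the support-function/Minkowski-cancellation identity is doing the real work, and it is the reason $\mathcal{E}_\delta$ must be a genuine confidence region of $\mathbf{t}$ rather than any larger enclosing set. The remaining ingredients — convexity of Minkowski sums and the monotonicity and continuity of the $\chi^2_n$ CDF — are routine.
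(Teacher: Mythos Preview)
Your proposal is correct and follows essentially the same construction as the paper: form the Minkowski sum of the nominal convex obstacle with the $\chi^2_n$-calibrated confidence ellipsoid of the displacement, reduce $P(\mathcal{O}\subseteq\mathcal{S})$ to $P(d\in\mathcal{D})=1-\epsilon$, and invoke convexity of Minkowski sums. Your support-function/Minkowski-cancellation argument makes the reduction step more explicit than the paper's treatment, which simply asserts the equivalence, but the underlying idea is identical.
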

\begin{proof}
    As defined above, let $\mathcal{O}$ denote the set of points occupied by the uncertain obstacle, and let $O$ denote the (convex) set of points occupied by the nominal geometry of that obstacle (located at the expected location of the uncertain object, so that the additive Gaussian noise is zero-mean). We can express this relationship formally as
    \begin{equation}
        \mathcal{O} = \set{x + d : x \in O}; \quad d \sim \mathcal{N}(0, \Sigma)
    \end{equation}
    where $\mathcal{N}(0, \Sigma)$ is the multivariate Gaussian distribution with zero mean and covariance $\Sigma$. Since $d$ is a zero-mean Gaussian random variable, then if we define the set
    \begin{equation}
        \mathcal{D} = \set{d : d^T \Sigma^{-1} d \leq \phi^{-1}(1-\epsilon)}
    \end{equation}
    where $\phi^{-1}$ is the inverse of the cumulative distribution function (CDF) of the chi-squared distribution with $n$ degrees of freedom, then $P(d \in \mathcal{D}) = 1-\epsilon$ (as shown in \cite{axelrodProvablySafeRobot2018}). Next, we can define the Minkowski sum $\mathcal{S} = \mathcal{O}+\mathcal{D}$ as
    \begin{equation}
        \mathcal{S} = \set{x+d: x \in O, d \in \mathcal{D}}
    \end{equation}
    If we denote the probability that $\mathcal{O}$ is a subset of $\mathcal{S}$ as $P(\mathcal{O} \subseteq \mathcal{S})$, then we can observe that this event is equivalent to the event that for all $y \in \mathcal{O}$ there exists an $x \in O$ and $d \in \mathcal{D}$ such that $x+d=y$.
    %
    % \begin{align}
    %     % &\phantom{\iff}\ \mathcal{O} \subseteq \mathcal{S}\\
    %     % &\iff y \in \mathcal{S};\ \forall\  y \in \mathcal{O} \\
    %     &\forall\ y \in \mathcal{O}; \exists\ x \in O, d \in \mathcal{D}\ s.t.\ y = x+d 
    % \end{align}
    %
    By definition, every $y \in \mathcal{O}$ equals $x_y + d_y$ for some $x_y \in O$ and $d_y \sim \mathcal{N}(0, \Sigma)$,
    % Thus, the last event is equivalent to
    % %
    % \begin{align}
    %     &\phantom{\iff}\ \ \exists\ x \in O, d \in \mathcal{D}\ s.t.\ y = x+d;\ \forall\ y \in \mathcal{O} \\
    %     &\iff \exists\ x \in O, d \in \mathcal{D}\ s.t.\ x_y + d_y = x+d;\\
    %     &\phantom{\iff\iff} d_y \sim \mathcal{N}(0, \Sigma);\ \forall\ x_y \in O
    % \end{align}
    % %
    % Trivially, there will always exist some $x \in O$ for each $x_y \in O$ such that $x = x_y$, so this last event reduces to
    so this last event reduces to $d_y \in \mathcal{D};\ d_y \sim \mathcal{N}(0, \Sigma)$.
    % %
    % \begin{align}
    %     % &\phantom{\iff}\ \exists\ d \in \mathcal{D}\ s.t.\ d = d_y;\ d_y \sim \mathcal{N}(0, \Sigma) \\
    %     % &\iff
    % \end{align}
    % %
    Because of these equivalencies, we see that
    \begin{equation}
        P(\mathcal{O} \subseteq \mathcal{S}) = P(d_y \in \mathcal{D}) = 1-\epsilon
    \end{equation}
    Thus, we see that $\mathcal{S}$ is a maximal \eshadow\ of $\mathcal{O}$ for any $\epsilon > 0$ (where $\phi^{-1}(1-\epsilon)$ is finite). Furthermore, it is straightforward to show that the Minkowski sum of two convex sets is itself a convex set. Let $x=x_O + x_D$ and $y=y_O + y_D$ be points in $\mathcal{S}$, where $x_O$ and $y_O$ are points in $O$ and $x_D$ and $y_D$ are points in $\mathcal{D}$, and let $\lambda \in [0, 1]$. Note that
    \begin{align}
        \lambda x + (1-\lambda)y &= \lambda(x_O + x_D) + (1-\lambda)(y_O + y_D) \\
        &= \lambda x_O + (1-\lambda)y_O + \lambda x_D + (1-\lambda)y_D
    \end{align}
    Both $O$ and $\mathcal{D}$ are convex, so $\lambda x_O + (1-\lambda)y_O \in \mathcal{O}$ and $\lambda x_D + (1-\lambda)y_D \in \mathcal{D}$. It follows that $\lambda x + (1-\lambda)y \in \mathcal{S}$, so $\mathcal{S}$ is both a maximal \eshadow\ of $\mathcal{O}$ and convex. An example \eshadow\ generated using this procedure is shown in Fig.~\ref{fig:shadows}b.
\end{proof}

We remark that since $\mathcal{S}$ is convex, we can apply techniques such as the GJK algorithm to check whether $\mathcal{S}$ intersects with another convex shape in linear time with the number of vertices involved \cite{gilbertFastProcedureComputing1988}. Moreover, most modern collision checking libraries improve on this performance by using a two-step collision checking approach, so that the GJK algorithm is only run on shapes that are close enough to conceivably intersect, avoiding wasting effort on obviously non-colliding pairs. As a result, in practice $\mathcal{S}$ can be checked for collision with other convex shapes quite quickly. 

Furthermore, although explicitly computing the Minkowski sum of two convex shapes is an expensive operation ($O(n^2)$ in the number of vertices in the two shapes \cite{parkEfficientProbabilisticCollision2017}), the GJK algorithm can be run without explicitly constructing these sums by representing convex shapes using a support mapping, which maps directions in $\R^n$ to the point in a shape furthest in that direction. Because the support of a Minkowski sum is simply the sum of the supports of the two shapes, the GJK algorithm can be executed using an implicit representation of the Minkowski sum \cite{gilbertFastProcedureComputing1988}. This is one advantage of our approach compared with that presented in \cite{parkEfficientProbabilisticCollision2017}, which computes an integral over the faces of a Minkowski sum, incurring the full cost of explicitly constructing the Minkowski sum. As a result, the approach in \cite{parkEfficientProbabilisticCollision2017} is more accurate but also several orders of magnitude slower than our approach. Of course, the desired trade-off between accuracy and computation time is context-dependent, and efficiently utilizing both fast and slow estimates presents an intriguing opportunity for future work.

\section{Collision Probability Calculation}\label{sec:algs}

This section presents our approach for using convex \eshadow s to efficiently calculate upper bounds on the probability that an uncertain obstacle collides with a robot. Recall that because of the definition of an \eshadow, if no robot link intersects the \ejshadow\ of obstacle $\mathcal{O}_j$, then the probability of collision with $\mathcal{O}_j$ is at most $\epsilon_j$. Since we desire a tight upper bound to avoid excessively conservative estimates, we can apply a bisection search to iteratively calculate the smallest $\epsilon_j$ (or equivalently, the largest \ejshadow) such that there is no collision between the \ejshadow\ $\mathcal{S}_j$ and the robot, similarly to Axelrod et al. \cite{axelrodProvablySafeRobot2018}. This method is described in Algorithm~\ref{alg:one-shot}.

\begin{algorithm}[tb]
\SetAlgoLined
\SetKwInOut{Input}{Input}
\Input{A set of robot links $\mathcal{X}_i$, obstacle $\mathcal{O}$, covariance matrix $\Sigma$, and precision tolerance $\epsilon_{tol}$}
\KwResult{$\epsilon$ such that the true risk of collision with $\mathcal{O}$ cannot exceed $\epsilon+\epsilon_{tol}/2$}
 $\epsilon_{l} \gets 0$, $\epsilon_{u} \gets 1$\;
 $\epsilon \gets (\epsilon_u + \epsilon_l) / 2$\;
 \While{$\epsilon_u - \epsilon_l > \epsilon_{tol}$}{
  Construct \eshadow\  of $\mathcal{O}$ according to Theorem~\ref{convexity_proof}\;
  \eIf{$\epsilon$\text{-shadow of }$\mathcal{O}$\text{ intersects any }$\mathcal{X}_i$}{
   $\epsilon_l \gets \epsilon$\;
   }{
   $\epsilon_u \gets \epsilon$\;
  }
  $\epsilon \gets (\epsilon_u + \epsilon_l) / 2$\;
 }
 \caption{One-shot bisection search method for computing an upper bound on the risk of collision between the robot and an obstacle.}\label{alg:one-shot}
\end{algorithm}

An advantage of this bisection-search method for calculating the maximal \eshadow\ is that it requires only $\log(1/\epsilon_{tol})$ queries to the collision checking algorithm, where $\epsilon_{tol}$ is the tolerance for error in the estimate of $\epsilon$. However, as we can see from Fig.~\ref{fig:conservative}, the upper bound provided by the \ejshadow\ can be extremely conservative, since this method treats any case in which $\mathcal{O}$ protrudes beyond its \eshadow\ as a risk of collision. Since there are many cases in which $\mathcal{O}$ can protrude beyond its \eshadow\ without endangering the robot (as shown in Fig.~\ref{fig:conservative}), a method (like that proposed in Axelrod et al.) using only a single line search can yield overly conservative results. As a result, although Algorithm~\ref{alg:one-shot} extends existing approaches to non-trivial geometries, it is fairly conservative and acts as a good baseline for our second approach.

\begin{figure}[htbp]
    \centerline{\includegraphics[width=0.6\linewidth]{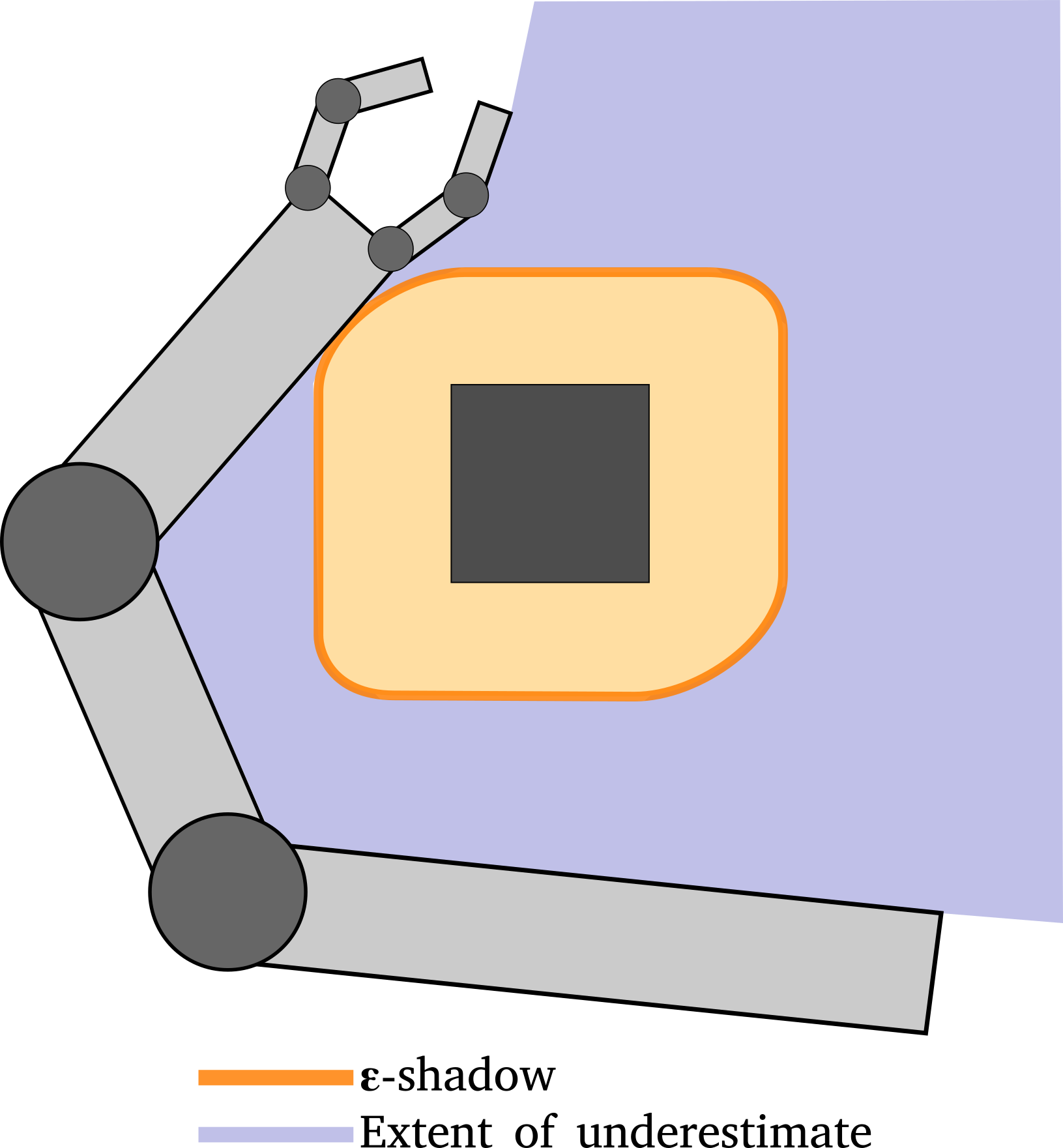}}
    \caption{A single bisection search (like that proposed in \cite{axelrodProvablySafeRobot2018}) can yield overly conservative estimates.}
    \label{fig:conservative}
\end{figure}

To produce a less conservative upper bound, we can exploit the fact that as we expand an obstacle's \eshadow, it is likely to collide first with only one link of the robot, denoted $\mathcal{X}_0$. As a result, there is often space around the \eshadow, away from $\mathcal{X}_0$, into which it can expand further without colliding with other links of the robot. The further the \eshadow\ can expand, the more collision risk it can capture, certifying a tighter bound on collision risk.
Pseudo-code for an algorithm taking advantage of this secondary expansion is provided in Algorithm~\ref{alg:two-shot} and illustrated in Fig.~\ref{fig:two-shot}. This ``two-shot'' algorithm provides tighter upper bounds on collision risk than those computed using a one-shot method, with only a minor trade-off in running time.

\begin{figure*}[htbp]
    \centerline{\includegraphics[width=0.3\linewidth]{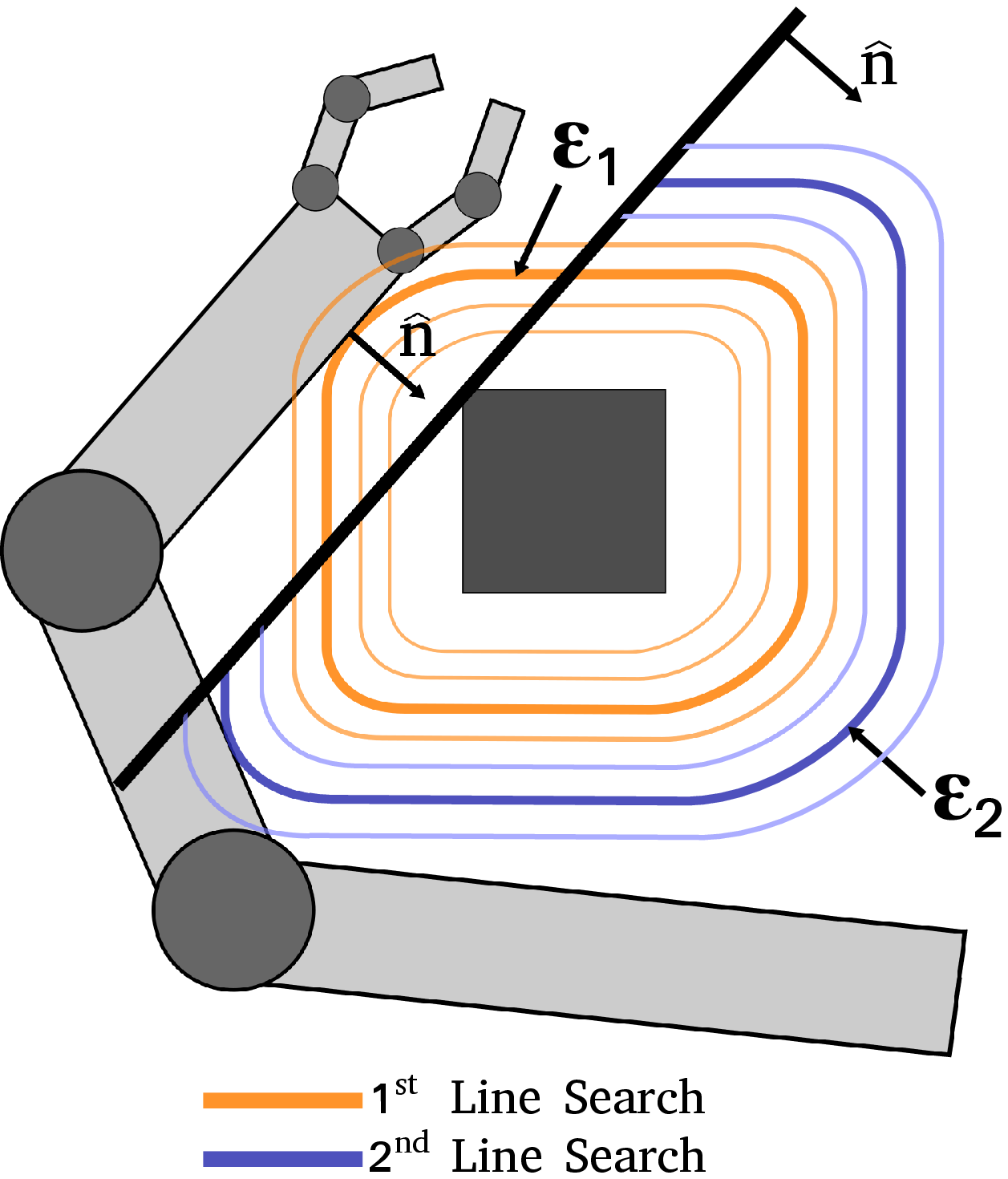}\hspace{1cm}\includegraphics[width=0.3\linewidth]{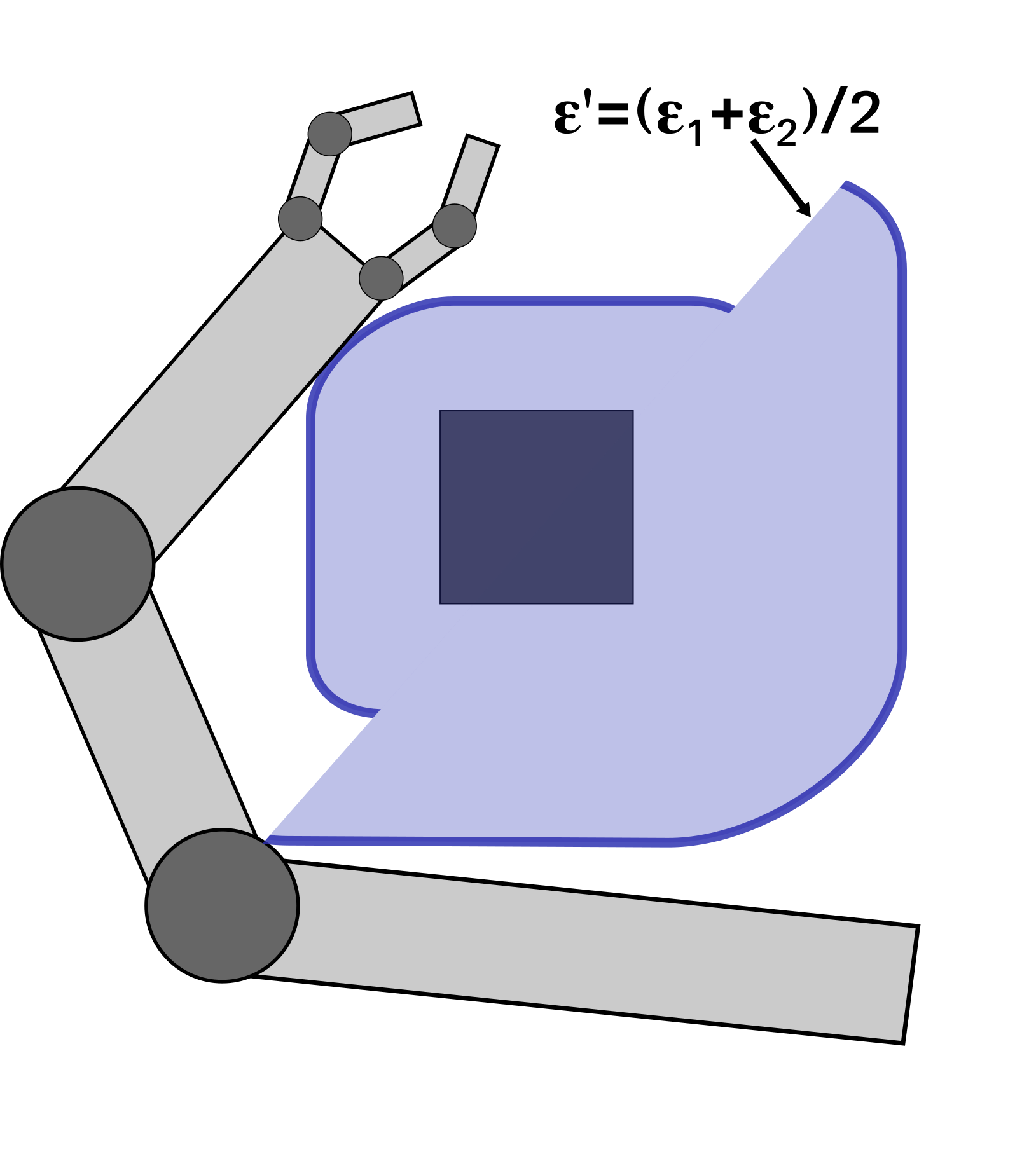}}
    \caption{An illustration of the two-shot algorithm for computing tighter upper bounds on collision risk. First we expand the \eshadow\ in all directions during the first line search, then we expand the \eshadow\ away from the first point of contact until a second contact occurs.}
    \label{fig:two-shot}
\end{figure*}

\begin{algorithm}[tb]
\SetAlgoLined
\SetKwInOut{Input}{Input}
\Input{A set of robot links $\mathcal{X}_i$, obstacle $\mathcal{O}$, covariance matrix $\Sigma$, and precision tolerance $\epsilon_{tol}$}
\KwResult{$\epsilon$ such that the true risk of collision with $\mathcal{O}$ cannot exceed $\epsilon+\epsilon_{tol}/2$}
 $\epsilon_{l} \gets 0$, $\epsilon_{u} \gets 1$\;
 $\epsilon_1 \gets (\epsilon_u + \epsilon_l) / 2$\;
 \While{$\epsilon_u - \epsilon_l > \epsilon_{tol}$}{
  Construct $\epsilon_1$-shadow of $\mathcal{O}$ according to Theorem~\ref{convexity_proof}\;
  \eIf{$\epsilon_1$\text{-shadow of }$\mathcal{O}$\text{ intersects any }$\mathcal{X}_i$}{
   $\epsilon_l \gets \epsilon$\;
   $\oldhat{n} \gets$ the normal between $\mathcal{O}$ and $\mathcal{X}_i$ at the point of collision, pointing into $\mathcal{O}$\;
   }{
   $\epsilon_u \gets \epsilon$\;
  }
  $\epsilon_1 \gets (\epsilon_u + \epsilon_l) / 2$\;
 }
 $\epsilon_{l} \gets 0$\;
 $\epsilon_2 \gets (\epsilon_u + \epsilon_l) / 2$\;
 \While{$\epsilon_u - \epsilon_l > \epsilon_{tol}$}{
  Construct $\epsilon_2$-shadow of $\mathcal{O}$ according to Theorem~\ref{two_shot_shadow_proof}\;
  \eIf{$\epsilon_2$\text{-shadow of }$\mathcal{O}$\text{ intersects any }$\mathcal{X}_i$}{
   $\epsilon_l \gets \epsilon$\;
   }{
   $\epsilon_u \gets \epsilon$\;
  }
  $\epsilon_2 \gets (\epsilon_u + \epsilon_l) / 2$\;
 }
 $\epsilon \gets (\epsilon_1 + \epsilon_2)/2$
 \caption{Two-shot method for certifying tighter bounds on robot-obstacle collision risk.}\label{alg:two-shot}
\end{algorithm}

Intuitively, this two-shot algorithm works by first finding the largest $\epsilon_1$-shadow (smallest $\epsilon_1$) that does not intersect with the robot but is tangent to it at some link $\mathcal{X}_{0}$. We then calculate the collision normal vector $\oldhat{n}$ at the interface of the $\epsilon_1$-shadow and $\mathcal{X}_{0}$ (pointing into the $\epsilon_1$-shadow) and apply a second bisection search to find a larger $\epsilon_2$-shadow that extends the $\epsilon_1$-shadow in the $\oldhat{n}$-direction. To expand in the $\oldhat{n}$-direction, this second search constructs the $\epsilon_2$-shadow as the Minkowski sum of the obstacle and an ellipsoid intersected with the half-space $\oldhat{n} \cdot x \geq 0$. This $\epsilon_2$-shadow is also convex and can thus be checked for collision quickly using existing algorithms. Furthermore, finding this second shadow requires no more than twice the number of collision checks needed by the single-shot line search and thus also runs in $O(\log{1/\epsilon_{tol}})$ time complexity.

Once these $\epsilon_1$- and $\epsilon_2$-shadows have been found, we can take the union of these shapes as an $\epsilon'$-shadow, where $\epsilon' = (\epsilon_1 + \epsilon_2)/2$. The following theorems formalize this approach.

\begin{thm}\label{two_shot_shadow_proof}
    Let $\mathcal{D} = \set{d : d^T\Sigma^{-1} d \leq \phi^{-1}(1-\epsilon_2)}$ be an ellipsoid and $\mathcal{D}' = \mathcal{D} \cap \set{\oldhat{n} \cdot d \geq 0}$
    be the intersection of that ellipsoid with a half-space. Moreover, let $\mathcal{S}_2$ be the Minkowski sum of $O$ and $\mathcal{D}'$. Then $\mathcal{S}_2$ is a maximal $\epsilon_2/2$-shadow of $\mathcal{O}$. Furthermore, $\mathcal{S}_2$ is convex.
\end{thm}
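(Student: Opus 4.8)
The statement packages two claims, and I would dispose of the easy one first: convexity of $\mathcal{S}_2$ needs no new ideas. By hypothesis $O$ is convex; $\mathcal{D}$ is a sublevel set of the positive-definite quadratic form $d\mapsto d^{T}\Sigma^{-1}d$, hence a solid ellipsoid and in particular convex; and $\{\,d:\hat{n}\cdot d\geq 0\,\}$ is a half-space, hence convex. Therefore $\mathcal{D}'=\mathcal{D}\cap\{\hat{n}\cdot d\geq 0\}$ is convex as an intersection of convex sets, and $\mathcal{S}_2=O+\mathcal{D}'$ is convex by exactly the computation given at the end of the proof of Theorem~\ref{convexity_proof}: for two points $x=x_O+x_{D'}$ and $y=y_O+y_{D'}$ of $\mathcal{S}_2$, regroup $\lambda x+(1-\lambda)y$ as $[\lambda x_O+(1-\lambda)y_O]+[\lambda x_{D'}+(1-\lambda)y_{D'}]$ and apply convexity of $O$ and of $\mathcal{D}'$ separately.

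For the shadow claim I would mirror the proof of Theorem~\ref{convexity_proof}. Write $\mathcal{O}=\{x+d:x\in O\}$ with $d\sim\mathcal{N}(0,\Sigma)$ the single realized displacement, and argue that the event $\mathcal{O}\subseteq\mathcal{S}_2$ coincides with the event $d\in\mathcal{D}'$: the implication $d\in\mathcal{D}'\Rightarrow\mathcal{O}\subseteq\mathcal{S}_2$ is immediate since then $x+d\in O+\mathcal{D}'=\mathcal{S}_2$ for all $x\in O$, and the converse is the cancellation property of Minkowski sums ($A+C\subseteq B+C$ with $C$ nonempty and bounded and $B$ closed convex forces $A\subseteq B$), applied with $A=\{d\}$, $B=\mathcal{D}'$, $C=O$. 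It remains to evaluate $P(d\in\mathcal{D}')$, and here the one ingredient beyond Theorem~\ref{convexity_proof} is the half-space, which I would handle by symmetry: the reflection $d\mapsto-d$ preserves the law of $d$ (zero mean) and fixes $\mathcal{D}$ (an origin-centered ellipsoid) while swapping the closed half-spaces $H^{+}=\{\hat{n}\cdot d\geq 0\}$ and $H^{-}=\{\hat{n}\cdot d\leq 0\}$; since $H^{+}\cap H^{-}$ is a hyperplane and carries zero Gaussian mass, $\mathcal{D}\cap H^{+}$ and $\mathcal{D}\cap H^{-}$ have equal probability and together cover $\mathcal{D}$, so each has probability $\tfrac{1}{2}P(d\in\mathcal{D})$. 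Combining with $P(d\in\mathcal{D})=1-\epsilon_2$ — which holds exactly as in Theorem~\ref{convexity_proof}, since $d^{T}\Sigma^{-1}d\sim\chi^{2}_{n}$ and $\phi^{-1}$ inverts the $\chi^{2}_{n}$ CDF — gives $P(\mathcal{O}\subseteq\mathcal{S}_2)=\tfrac{1}{2}(1-\epsilon_2)$, so by Definition~\ref{max_eshadow_defn} $\mathcal{S}_2$ is a maximal $\tfrac{1+\epsilon_2}{2}$-shadow of $\mathcal{O}$ (the level for which the quantile $\phi^{-1}(1-\epsilon_2)$ used to build $\mathcal{D}$ is the correct choice; I read the $\epsilon_2/2$ in the statement as a slip for $\tfrac{1+\epsilon_2}{2}$).

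I do not expect a substantive obstacle: convexity is bookkeeping and the probability is a one-line symmetry argument. The only step that genuinely needs care — the one the word ``maximal'' actually rests on — is the converse implication $\mathcal{O}\subseteq\mathcal{S}_2\Rightarrow d\in\mathcal{D}'$; the proof of Theorem~\ref{convexity_proof} passes over the analogous point, and here, with $\mathcal{D}'$ a half-ellipsoid rather than a full ellipsoid, the ``obvious'' justification is less obviously valid, so I would make the cancellation step explicit. If one only wants the certificate itself — an \emph{upper bound} on the collision probability rather than a maximal shadow — the forward implication suffices and even this care is unnecessary.
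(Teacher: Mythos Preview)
Your argument is the paper's: reduce $\{\mathcal{O}\subseteq\mathcal{S}_2\}$ to $\{d\in\mathcal{D}'\}$, halve $P(d\in\mathcal{D})$ by symmetry, and finish with convexity of Minkowski sums; the paper writes the halving as a product $P(d\in\mathcal{D})\,P(\hat n\cdot d\geq 0)$ ``due to the symmetry of the Gaussian distribution,'' and your reflection $d\mapsto -d$ is precisely the justification that product needs. You are also right about the slip in the level: the paper's own proof asserts $P(d\in\mathcal{D})=\epsilon_2$, contradicting the $1-\epsilon$ established in Theorem~\ref{convexity_proof}, and with the correct value one obtains $P(\mathcal{O}\subseteq\mathcal{S}_2)=(1-\epsilon_2)/2$, hence a maximal $(1+\epsilon_2)/2$-shadow rather than $\epsilon_2/2$. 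The same sign confusion recurs in the proof of Theorem~\ref{union_proof}, but there it cancels and the algorithm's final bound $(\epsilon_1+\epsilon_2)/2$ survives intact. Your explicit cancellation step for the converse $\mathcal{O}\subseteq\mathcal{S}_2\Rightarrow d\in\mathcal{D}'$ is more than the paper provides; it simply invokes ``the same reasoning as in Theorem~\ref{convexity_proof},'' where that direction is equally glossed over.
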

\begin{proof}
    Let $d \sim \mathcal{N}(0, \Sigma)$ be a zero-mean Gaussian random variable representing the uncertain displacement of obstacle $\mathcal{O}$ from its nominal position. The probability that $d$ falls within the half-ellipsoid $\mathcal{D}'$ is given by
    \begin{align}
        P(d \in \mathcal{D}') &= P(d \in \mathcal{D} \cap d \in \set{x : \oldhat{n}\cdot x \geq 0}) \\
        &= P(d \in \mathcal{D}) P(d \in \set{x : \oldhat{n}\cdot x \geq 0}) \\
        &= \frac{1}{2} P(d \in \mathcal{D})
    \end{align}
    due to the symmetry of the Gaussian distribution.
    %under reflection about the origin.
    Recall that $P(d \in \mathcal{D}) = \epsilon_2$, so $P(d \in \mathcal{D}') = \epsilon_2/2$.
    % %
    % \begin{equation}
    %     P(d \in \mathcal{D}') = \frac{\epsilon_2}{2}.
    % \end{equation}
    %
    Using the same reasoning as in Theorem~\ref{convexity_proof}, it follows that
    \begin{equation}
        P(\mathcal{O} \subseteq \mathcal{S}_2) = P(d \in \mathcal{D}') = \frac{\epsilon_2}{2}
    \end{equation}
    which is sufficient to show that $\mathcal{S}_{2}$ is a maximal $\epsilon_2/2$-shadow of $\mathcal{O}$. To complete the proof, we observe that the intersection of an ellipsoid and a half-space is convex, and the Minkowski sum of two convex shapes is convex, so $\mathcal{S}_2$ is convex.
    %it follows that $\mathcal{S}_{2}$ is a convex $\epsilon_2/2$-shadow of $\mathcal{O}$.
\end{proof}

\begin{thm}\label{union_proof}
    Let $\mathcal{S}_1$ be a maximal $\epsilon_1$-shadow constructed according to Theorem~\ref{convexity_proof}
    % 
    % : the Minkowski sum of $O$ and the ellipsoid
    % %
    % \begin{equation}
    %     \mathcal{D}_1 = \set{d : d^T\Sigma d \leq \phi(1-\epsilon_1)}
    % \end{equation}
    % %
    and let $\mathcal{S}_2$ be a maximal $\epsilon_2/2$-shadow constructed according to Theorem~\ref{two_shot_shadow_proof}.
    % : the Minkowski sum of $O$ and the half-ellipsoid
    % %
    % \begin{equation}
    %     \mathcal{D}_2 = \set{d : d^T\Sigma d \leq \phi(1-\epsilon_2);\ \oldhat{n}\cdot d \geq 0}
    % \end{equation}
    % %
    Then $\mathcal{S}' = \mathcal{S}_1 \cup \mathcal{S}_2$
    %
    % \begin{equation}
    %     \mathcal{S}' = \mathcal{S}_1 \cup \mathcal{S}_2
    % \end{equation}
    %
    is a maximal $\epsilon'$-shadow of $\mathcal{O}$ with $\epsilon' = (\epsilon_1 + \epsilon_2)/2$.
\end{thm}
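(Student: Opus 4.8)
The plan is to reduce the statement to a short inclusion--exclusion computation on the underlying Gaussian displacement $d \sim \mathcal{N}(0,\Sigma)$, reusing the $\mathcal{O} = O + d$ decomposition from Theorems~\ref{convexity_proof} and~\ref{two_shot_shadow_proof}. Let $\mathcal{D}_1 = \set{d : d^T\Sigma^{-1}d \le \phi^{-1}(1-\epsilon_1)}$ and $\mathcal{D}_2 = \set{d : d^T\Sigma^{-1}d \le \phi^{-1}(1-\epsilon_2)}$ be the two ellipsoids, let $H^+ = \set{d : \oldhat{n}\cdot d \ge 0}$ be the half-space used in the second line search, and let $\mathcal{D}_2' = \mathcal{D}_2 \cap H^+$, so $\mathcal{S}_1 = O + \mathcal{D}_1$ and $\mathcal{S}_2 = O + \mathcal{D}_2'$. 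First I would recall from the proofs of Theorems~\ref{convexity_proof} and~\ref{two_shot_shadow_proof} that, because $\mathcal{D}_1$ and $\mathcal{D}_2'$ are convex, containment of the translated obstacle is governed entirely by $d$: $\set{\mathcal{O} \subseteq \mathcal{S}_1} = \set{d \in \mathcal{D}_1}$ and $\set{\mathcal{O} \subseteq \mathcal{S}_2} = \set{d \in \mathcal{D}_2'}$, with $P(d\in\mathcal{D}_1) = 1-\epsilon_1$ and, by the half-space symmetry argument of Theorem~\ref{two_shot_shadow_proof}, $P(d\in\mathcal{D}_2') = \tfrac{1}{2}P(d\in\mathcal{D}_2) = \tfrac{1}{2}(1-\epsilon_2)$.

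Next I would use that the Minkowski sum distributes over unions, so $\mathcal{S}' = \mathcal{S}_1 \cup \mathcal{S}_2 = O + (\mathcal{D}_1 \cup \mathcal{D}_2')$ and the event $\set{\mathcal{O}\subseteq\mathcal{S}'}$ contains $\set{d\in\mathcal{D}_1\cup\mathcal{D}_2'}$. Evaluating $P(d\in\mathcal{D}_1\cup\mathcal{D}_2')$ is the core of the argument. The structural fact that makes it clean is that the second bisection in Algorithm~\ref{alg:two-shot} ranges over $\epsilon_2\in[0,\epsilon_u]$ with $\epsilon_u \le \epsilon_1$, so $\phi^{-1}(1-\epsilon_2)\ge\phi^{-1}(1-\epsilon_1)$ and hence $\mathcal{D}_1\subseteq\mathcal{D}_2$; in particular $\mathcal{D}_1\cap\mathcal{D}_2' = \mathcal{D}_1\cap H^+$. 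Then by inclusion--exclusion and the $d\mapsto-d$ symmetry of the centered Gaussian (which fixes each ellipsoid and interchanges $H^+$ with its complement, so $P(d\in\mathcal{D}_i\cap H^+) = \tfrac{1}{2}P(d\in\mathcal{D}_i)$),
\begin{align}
    P(d\in\mathcal{D}_1\cup\mathcal{D}_2') &= P(d\in\mathcal{D}_1) + P(d\in\mathcal{D}_2\cap H^+) - P(d\in\mathcal{D}_1\cap H^+) \notag\\
    &= (1-\epsilon_1) + \tfrac{1}{2}(1-\epsilon_2) - \tfrac{1}{2}(1-\epsilon_1) = 1 - \tfrac{\epsilon_1+\epsilon_2}{2}. \notag
\end{align}
This already gives $P(\mathcal{O}\subseteq\mathcal{S}') \ge 1-\epsilon'$ with $\epsilon' = (\epsilon_1+\epsilon_2)/2$, which is exactly the property a collision certificate needs.

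The step I expect to be the main obstacle is the reverse inclusion $\set{\mathcal{O}\subseteq\mathcal{S}'}\subseteq\set{d\in\mathcal{D}_1\cup\mathcal{D}_2'}$ (up to a Gaussian-null set), which is what upgrades the bound to the equality $P(\mathcal{O}\subseteq\mathcal{S}') = 1-\epsilon'$ asserted by the theorem. In contrast to a single convex shadow, $\mathcal{D}_1\cup\mathcal{D}_2'$ is not convex---it has a re-entrant crease where the boundary of $\mathcal{D}_1$ meets the bulge of $\mathcal{D}_2$ on the $\oldhat{n}$ side---so the support-function argument behind Theorems~\ref{convexity_proof} and~\ref{two_shot_shadow_proof} only gives $d$ in the closed convex hull of $\mathcal{D}_1\cup\mathcal{D}_2'$, which is strictly weaker than what is needed. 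I would try to close this by cases on $d$: if $d\notin\mathcal{D}_2$, then $\mathcal{O}\not\subseteq O+\mathcal{D}_2\supseteq\mathcal{S}'$ by the convex characterization applied to $\mathcal{D}_2$; if $d\in\mathcal{D}_2$ but $d\notin\mathcal{D}_1\cup H^+$, I would take the support point of $\mathcal{O}$ in the direction $\Sigma^{-1}d$ along which $d$ violates $\mathcal{D}_1$, verify it lies outside $\mathcal{S}_1$, and argue it cannot be absorbed by $\mathcal{S}_2$ because $d$ sits strictly on the far side of $H^+$. If that last case cannot be dispatched in full generality, the theorem still stands as a valid (at worst mildly conservative) risk certificate on the strength of the $\ge$ direction alone.
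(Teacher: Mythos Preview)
Your approach matches the paper's: reduce $\{\mathcal{O}\subseteq\mathcal{S}'\}$ to the displacement event $\{d\in\mathcal{D}_1\cup\mathcal{D}_2'\}$, apply inclusion--exclusion, use $\epsilon_2\le\epsilon_1$ (hence $\mathcal{D}_1\subseteq\mathcal{D}_2$) to reduce the intersection to $\mathcal{D}_1\cap H^+$, and invoke the $d\mapsto-d$ symmetry of the centered Gaussian to obtain the half-probabilities. The paper simply asserts the event equality ``following the logic of Theorem~\ref{convexity_proof}'' and does not address the reverse inclusion you flag; your observation that only the containment $\{d\in\mathcal{D}_1\cup\mathcal{D}_2'\}\subseteq\{\mathcal{O}\subseteq\mathcal{S}'\}$ is immediate---and that this direction already delivers the risk certificate $P(\mathcal{O}\subseteq\mathcal{S}')\ge 1-\epsilon'$---is a fair point that the paper leaves unexamined.
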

\begin{proof}
    Following the logic of our proof of Theorem~\ref{convexity_proof}, we see that the event $\mathcal{O} \subseteq \mathcal{S}'$ reduces to the disjunction
    \begin{equation}
        d \in \mathcal{D}_1 \vee d \in \mathcal{D}_2
    \end{equation}
    where $d \sim \mathcal{N}(0, \Sigma)$ is the uncertain displacement of obstacle $\mathcal{O}$ from its nominal position, and
    \begin{align}
        \mathcal{D}_1 &= \set{d : d^T\Sigma^{-1} d \leq \phi^{-1}(1-\epsilon_1)} \\
        \mathcal{D}_2 &= \set{d : d^T\Sigma^{-1} d \leq \phi^{-1}(1-\epsilon_2), \oldhat{n}\cdot d \geq 0}
    \end{align}
    Using this reduction, we see that
    \begin{align}
        P(\mathcal{O} \subseteq \mathcal{S}') &= P(d \in \mathcal{D}_1 \vee d \in \mathcal{D}_2) \\
        &= P(d \in \mathcal{D}_1) + P(d \in \mathcal{D}_2) - P(d \in \mathcal{D}_1 \cap \mathcal{D}_2) \\
        &= \epsilon_1 + \frac{\epsilon_2}{2} - P(d \in \mathcal{D}_1 \cap \mathcal{D}_2)
    \end{align}

    Note that the second line search in Algorithm~\ref{alg:two-shot} always yields $\epsilon_2 \leq \epsilon_1$ (since it searches outwards from the \eshadow\ found during the first line search), and recall that the inverse CDF of the chi-squared distribution $\phi^{-1}$ is strictly increasing. As a result, the intersection $\mathcal{D}_1 \cap \mathcal{D}_2$ can be expressed more succinctly as the set
    \begin{equation}
        \set{d : d^T \Sigma^{-1} d \leq \phi^{-1}(1-\epsilon_1);\ \oldhat{n}\cdot d \geq 0}
    \end{equation}
    From Theorem~\ref{two_shot_shadow_proof}, it follows that $P(\mathcal{D}_1 \cap \mathcal{D}_2) = \epsilon_1/2$,
    %
    % \begin{equation}
    %     P(\mathcal{D}_1 \cap \mathcal{D}_2) = \frac{\epsilon_1}{2}.
    % \end{equation}
    %
    so we see that
    \begin{equation}
        P(\mathcal{O} \subseteq \mathcal{S}') = \epsilon' = \frac{\epsilon_1 + \epsilon_2}{2}
    \end{equation}
\end{proof}

Because $\epsilon_2 \leq \epsilon_1$, we see that $\epsilon' \leq \epsilon_1$, so the shadow constructed using this two-shot algorithm provides a tighter upper bound on the probability of collision than the single-shot algorithm based on Axelrod et al.. Furthermore, even though the enlarged shadow $\mathcal{S}'$ is not convex, our algorithm only requires collision checks involving $\mathcal{S}_1$ and $\mathcal{S}_2$, both of which are convex, preserving the performance benefits of our approach. Additionally, because the second bisection search in Algorithm~\ref{alg:two-shot} can be warm-started based on the results from the first, the cost of running the two-shot variant is often less than twice than that of the one-shot variant. The details of these performance trade-offs are made clear in the next section, where we characterize the performance of both algorithms.

\section{Performance}

Although the two-shot method discussed above provides a tighter bound on the probability of collision, there is necessarily a  trade-off between the increased accuracy of this estimate and the additional time needed to compute it. We implement this method using the Bullet collision checking library \cite{coumansBulletPhysicsEngine}, using the testing scenario shown in Fig.~\ref{fig:setup}, which includes a simplified manipulator with convex geometry in a scene with three obstacles. Each obstacle was assigned a qualitatively different covariance matrix. The location of the green cylinder obstacle has covariance matrix $0.01 I$, where $I$ is the $3\times3$ identity matrix, modeling equal uncertainty in all directions. The location of the yellow cylinder has covariance $$\Sigma_{yellow} = \mat{0.05, &0.07, &0.0\\0.07, &0.1, &0.0\\0.0, &0.0, &0.01}$$ modeling increased uncertainty in the $x$- and $y$-directions and relative certainty in the $z$-direction. The location of the red block has covariance $\Sigma_{red} = \text{diag}\mat{0.001, 0.001, 0.05}$, modeling relative certainty in the $x$- and $y$-directions and relative uncertainty in the $z$-direction. Note that all covariance matrices are expressed in the local frame of the obstacle. A covariance matrix in the global frame can be converted easily to one in the local obstacle frame by $\Sigma_{local} = R \pn{\Sigma_{global}} R^T$, where $R$ is the rotation matrix from the global frame to the local frame.

\begin{figure}[htbp]
    \centerline{\includegraphics[width=0.8\linewidth]{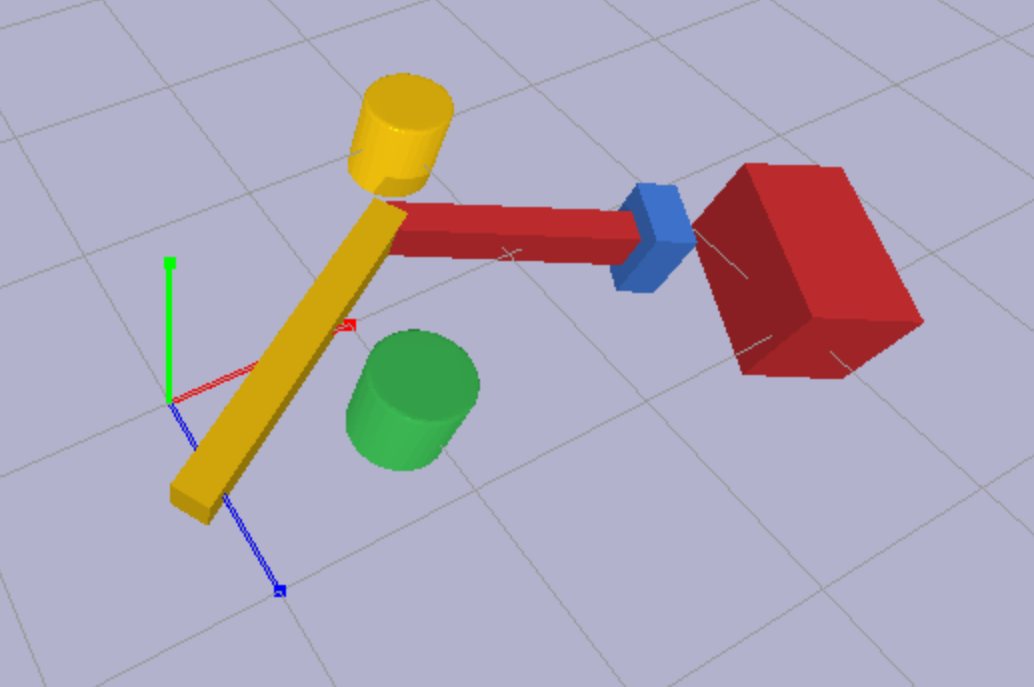}}
    \caption{A test scenario used to characterize the performance of our algorithms, showing a simplified manipulator in a scene with three unknown obstacles.}
    \label{fig:setup}
\end{figure}

\begin{figure}[htbp]
    \centerline{\includegraphics[width=0.8\linewidth]{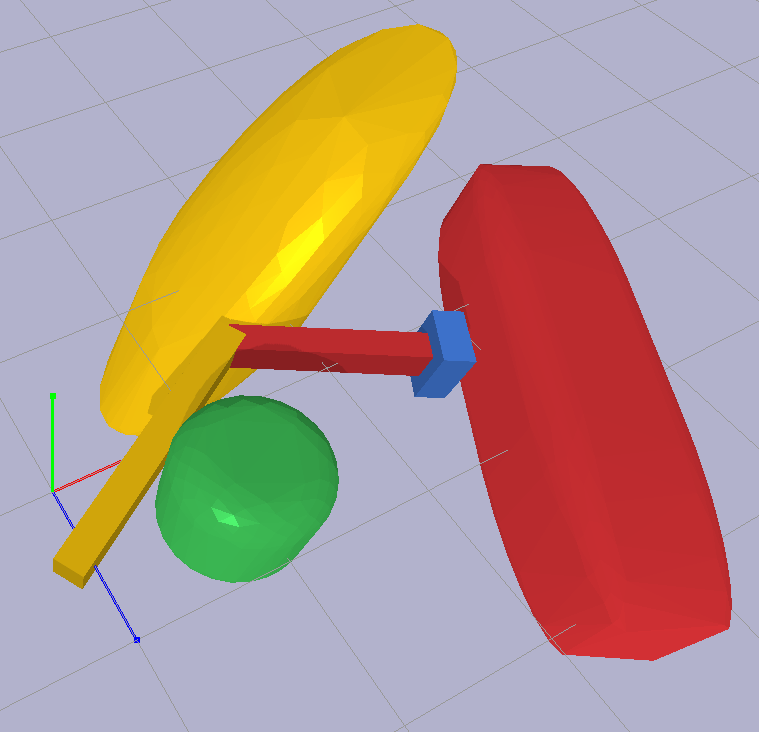}}
    \caption{The \eshadow s of the obstacles in Fig.~\ref{fig:setup}, computed using Algorithm~\ref{alg:one-shot}.}
    \label{fig:setup_shadows}
\end{figure}

\begin{figure}[htbp]
    \centerline{\includegraphics[width=0.8\linewidth]{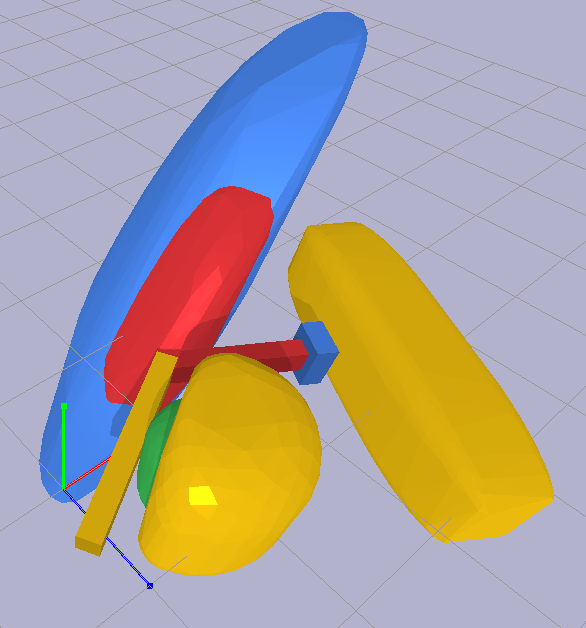}}
    \caption{The \eshadow s of the obstacles in Fig.~\ref{fig:setup}, computed using Algorithm~\ref{alg:two-shot}, providing less conservative probability bounds than those in Fig.~\ref{fig:setup_shadows}. In the case of the rightmost obstacle, the two-shot method yields no improvement as the estimated probability of collision has saturated at $0 \pm \epsilon_{tol}$.}
    \label{fig:setup_shadows_two_shot}
\end{figure}

In our implementation, we used the Bullet C++ library, running on one core of an Intel i9-7960X CPU, for performing deterministic convex-convex collision checking. To benchmark the performance of the underlying Bullet library, we performed $1,000,000$ collision checks between the robot and all three obstacles in Fig.~\ref{fig:setup}, randomly perturbing the position of each obstacle in each trial. On average, each obstacle required \SI{0.401}{\micro\second} per collision check. Since both Algorithms~\ref{alg:one-shot} and~\ref{alg:two-shot} require multiple calls to this collision checking subroutine, this figure provides a baseline for the performance of our algorithms. However, it is important to note that modern collision-checking libraries use different sub-solvers depending on the context, employing faster, less-accurate methods when objects are well separated, and devolving to more accurate routines when objects are in near-collision. As a result, the time required for collision checking varies: checks for obstacles in near-collision are slower, and these near-collision cases necessarily account for the majority of bisection search iterations.

The performance of our one- and two-shot algorithms on the test scenario in Fig.~\ref{fig:setup} is shown in Table~\ref{tab:test_scenario_results}. As expected, the two-shot algorithm requires more calls to the collision-checker, and thus runs more slowly, but it provides a much tighter bound on the probability of collision. In particular, the two-shot method provides a bound that is twice as tight as the one-shot method; this behavior is typical when all robot links are positioned to one side of an obstacle.

\begin{table}
\caption{Performance of the proposed collision probability estimation algorithms on the test scenario in Fig.~\ref{fig:setup}} \label{tab:test_scenario_results}
\begin{center}
    \begin{tabular}{|c|c|c|c|}
        \hline
        \textit{Algorithm}& \textit{Avg. run-time} & \textit{Est. collision} & \textit{True collision} \\
         & \textit{per obstacle (\SI{}{\micro\second})$^a$} & \textit{probability $\epsilon${}$^b$} & \textit{probability $\epsilon_0${}$^c$} \\
        \hline
        One-shot & 91.12 & 0.161005 & 0.011848 \\
        Two-shot & 157.74 & 0.080503 & 0.011848 \\
        \hline
        \multicolumn{4}{l}{$^a$ Averaged over $100,000$ trials.\ \ \  $^b$ Computed to tolerance $\epsilon_{tol} = 10^{-6}$.} \\
        % \multicolumn{4}{l}{$^b$ Computed to tolerance $\epsilon_{tol} = 10^{-6}$.} \\
        \multicolumn{4}{l}{$^c$ Averaged over $1,000,000$ trials.}
    \end{tabular}
\label{tab1}
\end{center}
\end{table}

Both variants of our algorithm are theoretically guaranteed to provide upper bounds on the probability of collision. We can verify this guarantee empirically by varying the uncertainty associated with the obstacles in Fig.~\ref{fig:setup} (simply by scaling the relevant covariance matrices) and comparing the estimated probability bounds to the true probability of collision (calculated as an average over 1,000,000 trials). This comparison is shown in Fig.~\ref{fig:bounds_abs}, where we see that the two estimates indeed provide an upper bound on the true probability of collision. Moreover, we also see in Fig.~\ref{fig:bounds_abs} that the two-shot estimate provides a tighter upper bound than the one-shot estimate; indeed, the two-shot estimate is guaranteed to be no greater than the one-shot estimate, and except in degenerate cases the two-shot estimate is strictly tighter.

\begin{figure}
    \centerline{\includegraphics[width=\linewidth]{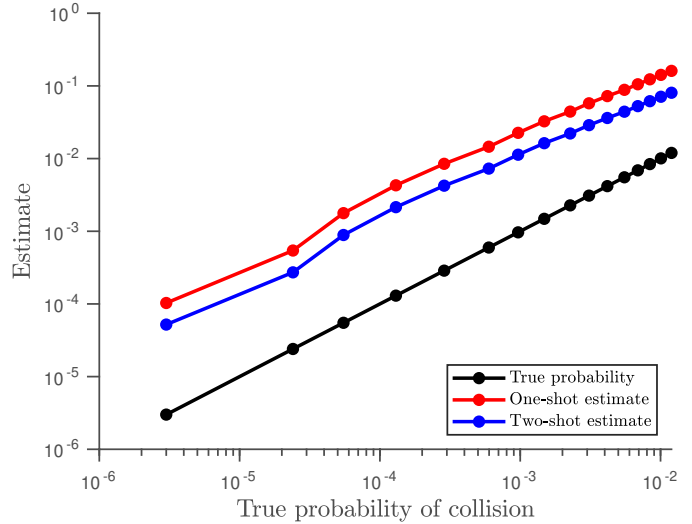}}
    \caption{Estimated collision probability vs. the true probability of collision (calculated from $1,000,000$ samples).}
    \label{fig:bounds_abs}
\end{figure}

We can also examine how the relative accuracy of these estimates changes as the true probability of collision varies. These results, shown in Fig.~\ref{fig:bounds_rel}, show that although the estimated bounds become looser as the true probability of collision decreases, the two-shot variant performs better than the one-shot variant, especially as the probability of collision goes to zero. Performance in this low-probability regime is particularly important in many applications, such as when robots are working in close proximity with humans or when autonomous vehicles plan motions around other cars, where users require that collisions are low-probability events.

\begin{figure}
    \centerline{\includegraphics[width=0.9\linewidth]{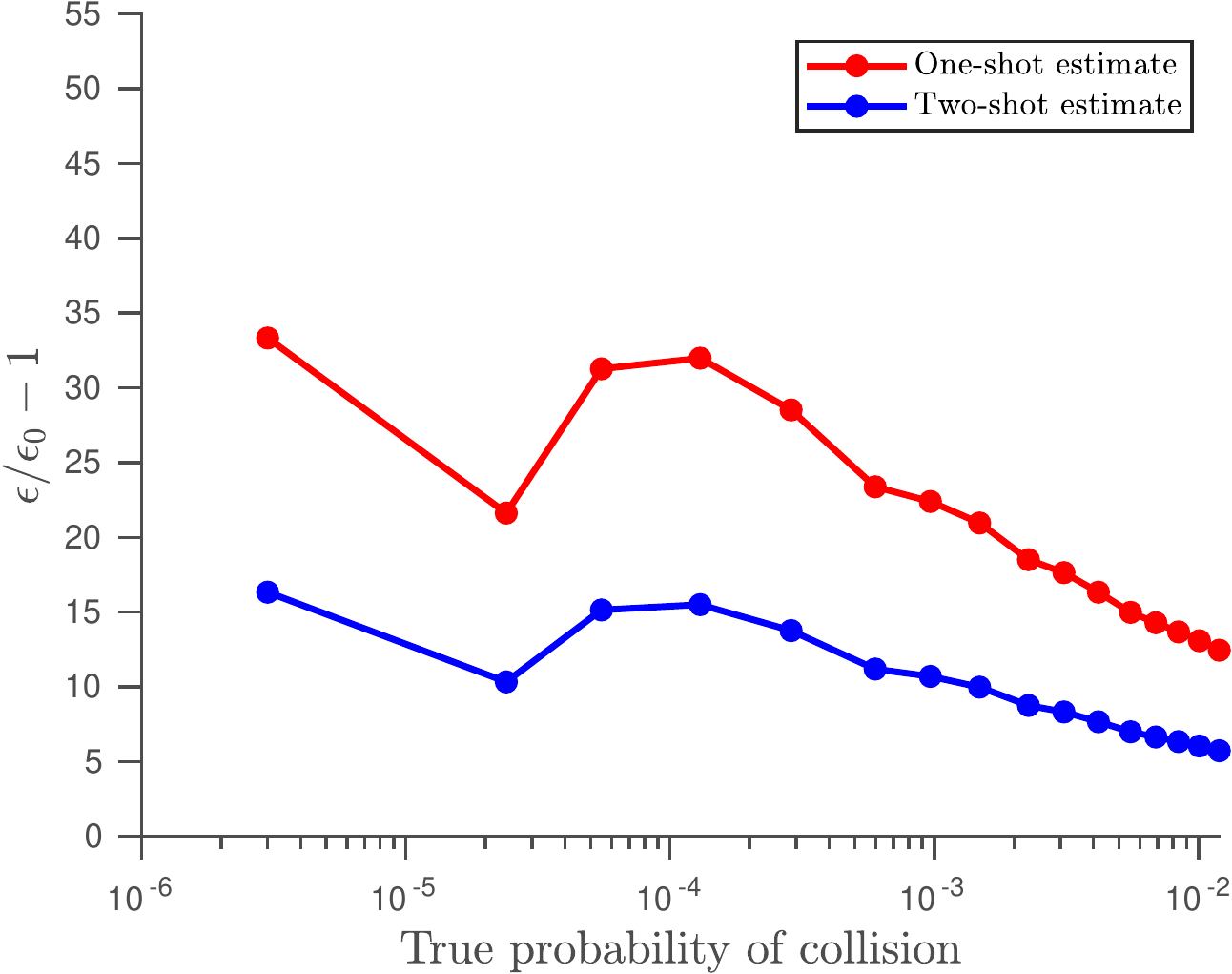}}
    \caption{Relative error in estimated probability of collision, defined as $\epsilon/\epsilon_0 - 1$, where $\epsilon_0$ is the true probability of collision.}
    \label{fig:bounds_rel}
\end{figure}

Finally, it is important to assess how our approach scales to more complicated environments. To evaluate how performance scales when we add additional robot links and obstacles to the scene, we used scenarios like those pictured in Fig.~\ref{fig:multi_setup}, where the central cube is an obstacle and the number of surrounding robot links varies (or vice-versa, with a central robot link and a variable number of surrounding obstacles). Because of the symmetry in this environment, the underlying geometry routines cannot accelerate collision checking by disqualifying obviously far-away candidates, making these scenarios close to the worst case. To evaluate the performance as $\epsilon_{tol}$ is varied, we used the example scenario shown in Fig.~\ref{fig:setup}. Our results, expressed as running time per obstacle averaged over $100,000$ trials, are shown in Fig.~\ref{fig:runtime}.

At a high level, we observe that adding additional robot links causes the time needed for individual queries to increase linearly. Since each query involves only a single obstacle, we would not expect adding obstacles to the scene to change the query time significantly, and this expectation is confirmed by the data. Additionally, we note that the running time varies logarithmically with the specified tolerance, as expected from an bisection-search based algorithm.

We note that even with demanding precision tolerances, our method consistently runs in under \SI{200}{\micro s}. Because of this low query time, our method can be used readily in real-time or trajectory-optimization applications, where collision probability estimation is a rate-limiting step.

\begin{figure}[htbp]
    \centerline{\includegraphics[width=0.8\linewidth]{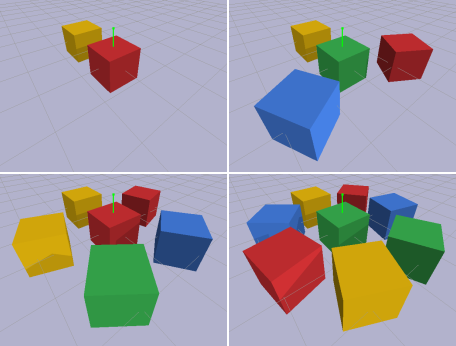}}
    \caption{Representative scenarios used to test the sensitivity of our algorithms' performance to the density of objects or robot links in the scene. The cube in the center represents an obstacle.}
    \label{fig:multi_setup}
\end{figure}

\begin{figure}
    \centerline{\includegraphics[width=0.9\linewidth]{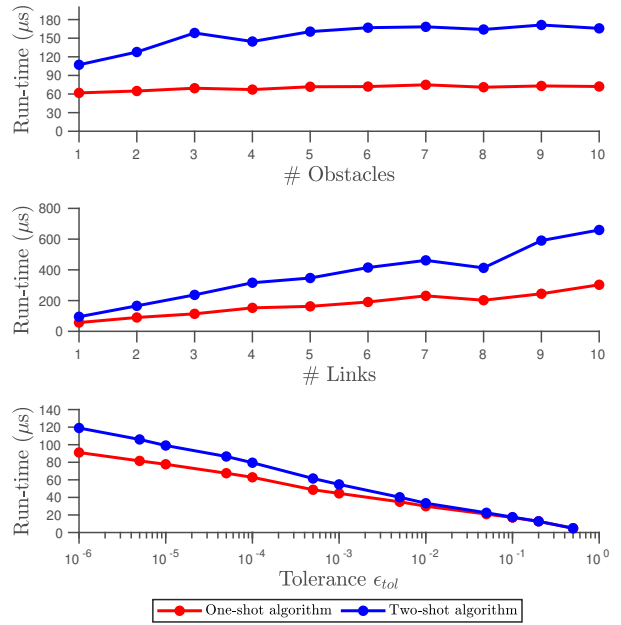}}
    \caption{Effect of scene complexity and tolerance $\epsilon_{tol}$ on performance. All run-times are reported on a per-obstacle basis.}
    \label{fig:runtime}
\end{figure}

\section{Conclusions}

In this paper, we present two algorithms for computing upper bounds on the probability of collision between uncertain convex objects, including a novel two-step algorithm that generates certificates of these bounds. Both algorithms provide strong theoretical guarantees that the true probability of collision does not exceed the estimate, and both provide extremely fast query times.

In future work, we intend to integrate this approach into an optimization framework to enable risk-constrained trajectory optimization in complex environments. Other exiting directions for future research include extending this approach to account for uncertainty in robot state as well as environment state and incorporating GPU-accelerated collision checking. 

We expect that our approach's combination of strong theoretical guarantees with fast performance will enable new uncertainty-aware trajectory planning and real-time risk monitoring algorithms, helping robots safely navigate inherently uncertain real-world environments.

\section*{Acknowledgments}

This work was supported by Airbus SE.

\bibliographystyle{IEEEtran}
\bibliography{ProximityAlertPaper}

\end{document}